\documentclass[11pt]{article}
\usepackage[a4paper,margin=27mm]{geometry}
\usepackage{amsmath,amssymb,amsthm,mathtools,bm}
\usepackage{microtype}
\usepackage[hidelinks]{hyperref}
\usepackage{enumitem}
\usepackage{booktabs}

\newtheorem{theorem}{Theorem}[section]
\newtheorem{proposition}[theorem]{Proposition}

\newtheorem{remark}[theorem]{Remark}
\newtheorem{definition}[theorem]{Definition}
\newcommand{\D}{\mathbb D}
\newcommand{\R}{\mathbb R}
\newcommand{\skewop}[1]{[#1]_{\times}}
\newcommand{\eD}{\varepsilon_{0}}
\newcommand{\eT}{\varepsilon}
\newcommand{\uvec}[1]{\underline{#1}}

\title{Dual Park--Ravani Interpolation of Rigid Motions:\\
Acceleration-Field Continuity and Holonomic Hermite Repair}
\author{Daniel Condurache\\
\small ``Gheorghe Asachi'' Technical University of Ia\c{s}i, Romania}
\date{}

\hypersetup{
  pdftitle={Dual Park--Ravani Interpolation of Rigid Motions: Acceleration-Field Continuity and Holonomic Hermite Repair},
  pdfauthor={Daniel Condurache},
  pdfsubject={Invariant interpolation of rigid motions with orthogonal dual tensors},
  pdfkeywords={Park--Ravani spline, rigid-motion interpolation, orthogonal dual tensors, hyper-dual algebra, acceleration field, holonomy, Hermite interpolation}
}

\begin{document}
\maketitle

\begin{abstract}
The Park--Ravani construction generates a twice continuously differentiable,
frame-invariant spline on $SO(3)$ by exponentiating cubic canonical-coordinate
polynomials. We show that the complete construction transfers, without a
change of form, to the group of orthogonal dual tensors, a representation of
rigid displacements. The transferred recurrence is stated compactly through
the dual extension of the right Jacobian of the exponential map and its first
Fr\'echet derivative. This yields interpolation of prescribed rigid poses and
continuity of the body dual twist and its first derivative. Using the
higher-order rigid-body kinematics of dual spatial twists, we then prove that
the resulting curve has a continuous physical acceleration field, not merely
a continuous quantity obtained by formally differentiating the dual part of a
twist. We also distinguish algebraic dual transfer from temporal differential
prolongation: their simultaneous first-order use takes place in a hyper-dual
algebra, and interpolation of arbitrary prolonged nodal data need not be
holonomic. A noncommuting three-pose example verifies the recurrence, all knot
continuity statements, and dimensional covariance under a change from metres
to millimetres. We define and analyse the first-order holonomy defect of a generic
hyper-dual interpolant, exhibit an exact counterexample, and remove the defect
by cubic or quintic Hermite interpolation in dual logarithmic coordinates.
\end{abstract}

\noindent\textbf{2020 Mathematics Subject Classification.}
65D05, 65D17, 70B10, 22E70.

\section{Introduction}

Park and Ravani introduced an efficient invariant interpolation scheme on
$SO(3)$ in which each segment is the exponential of a cubic polynomial in
canonical coordinates \cite{ParkRavani1997}. The construction interpolates an
ordered set of rotations, is invariant under changes of fixed and moving
frames, and produces a $C^2$ rotation curve. Its relation to minimum angular
acceleration is approximate in general and exact only in the special cases
identified in the original paper; no stronger variational assertion is used
here.

Invariant rotation and rigid-motion interpolation has also been developed
through quaternion curves \cite{Shoemake1985,Kim1995}, Lie-group versions of
the de Casteljau algorithm \cite{Crouch1999}, and interpolation or projection
schemes on $SE(3)$ \cite{Zefran1998,Belta2002}. Dual-quaternion blending and
screw interpolation provide a further computational representation of rigid
displacements \cite{Kavan2008}. These approaches establish the broader
geometric setting, but they do not supply the dual-tensor continuation of the
Park--Ravani recurrence or the acceleration-field and holonomy results proved
below. Unlike projection or blending methods, the present construction
retains the Park--Ravani recurrence in canonical coordinates and transfers it
exactly to a dimensionally homogeneous dual-tensor representation; no
projection back to $SE(3)$ is required.

Rigid displacement, however, possesses a natural dual description. An
orthogonal dual tensor combines rotation and translation without placing
dimensionless and length-valued entries in the same real matrix norm. This
suggests a direct question: does the Park--Ravani construction itself extend
from rotations to rigid motions by dual continuation? The answer is yes. More
precisely, every algebraic and analytic relation used by the construction
extends to the dual algebra, provided the logarithm is evaluated in one
admissible local branch \cite{CondurachePopa2025}.
The representation used here builds on dual-tensor formulations of rigid
motion \cite{ConduracheBurlacu2014} and on the classical principle of
transference \cite{Kotelnikov1895,Study1903,Rooney1975}. Hyper-dual arithmetic
is used only where the directional derivative of the Jacobian must be
evaluated exactly, in the automatic-differentiation sense of
\cite{FikeAlonso2012,CohenShoham2016}.

A second question is subtler. The dual part of the time derivative of a dual
twist is often informally called a linear acceleration. In general this is
incorrect. The derivative of the spatial dual twist determines a helicoidal
field of reduced accelerations; the complete physical acceleration field is
determined by the spatial dual twist together with its first derivative
\cite{Condurache2022}. Consequently, the physically meaningful smoothness
statement for a dual Park--Ravani spline must be proved at the level of the
entire point-acceleration field.

The contribution is not a formal substitution of dual quantities into a real
algorithm. The paper establishes: (i) an exact transfer theorem for the complete
Park--Ravani construction on orthogonal dual tensors; (ii) a compact
nonuniform-knot recurrence using the right Jacobian and its directional
derivative; (iii) continuity of the complete physical acceleration field of
every material point, rather than only of a formally differentiated dual
component; (iv) a precise distinction between dual extension and temporal
differential transformation, including an exact nonholonomy counterexample;
and (v) minimal-degree cubic and quintic dual-logarithmic Hermite repairs for
prescribed endpoint twist and acceleration-field data. Thus transfer,
physical interpretation, and holonomic repair form a single closed argument.

Theorem~\ref{thm:transfer} gives a precise algorithmic form of the classical
principle of transference for the complete Park--Ravani construction. Its
scope is deliberately algebraic: analytic identities transfer from real to
dual data. Proposition~\ref{prop:nonholonomy} identifies the boundary of this
statement. After an independent temporal nilpotent unit is introduced,
coefficientwise algebraic transfer does not ensure that the resulting
hyper-dual curve is the temporal jet of its base projection. Algebraic
transference and temporal holonomy are therefore distinct properties.

\section{Dual orthogonal tensors and rigid displacements}

\subsection{Dual vectors and tensors}

Let
\[
  \D=\R[\eD]/(\eD^2)
\]
be the algebra of dual numbers. Throughout the paper, an underline denotes a
dual quantity:
\[
  \uvec{x}=x+\eD x^{\circ},\qquad
  \uvec{A}=A+\eD A^{\circ}.
\]
The unit $\eD$ is the rigid, or geometric, nilpotent unit. It is dimensionless
as an algebraic generator; physical dimensions reside in the coefficients.
For example, the real part of a dual rotation vector is angular, whereas its
dual part has the dimension of length.

The transpose, scalar product, cross product, and tensor product are extended
by dual bilinearity. In particular,
\[
 (x+\eD x^{\circ})\times(y+\eD y^{\circ})
 =x\times y+\eD(x\times y^{\circ}+x^{\circ}\times y).
\]
Hence $\uvec{x}\times\uvec{x}=0$.

\begin{definition}
An orthogonal dual tensor is a dual tensor $\uvec{R}$ satisfying
\[
 \uvec{R}^{T}\uvec{R}=I,\qquad \det\uvec{R}=1.
\]
The resulting group is denoted by $SO(3,\D)$.
\end{definition}

Every rigid displacement $(R,p)$ admits the representation
\begin{equation}\label{eq:dual-pose}
 \uvec{R}=(I+\eD\skewop{p})R
          =R+\eD\skewop{p}R,
 \qquad R\in SO(3),\quad p\in\R^3.
\end{equation}
Composition and inversion are ordinary dual-tensor multiplication and
transpose. Thus $SO(3,\D)$ is a dimensionally homogeneous realization of the
rigid-displacement group.

The Lie algebra consists of dual skew tensors. We identify
$\skewop{\uvec{x}}$ with the dual vector $\uvec{x}$. Exponential and logarithm
are defined by analytic continuation of their real tensor series. An
\emph{admissible logarithm branch} is a local inverse of the exponential
selected around a dual logarithm $\uvec{x}$ whose real rotation-vector norm is
not a nonzero multiple of $2\pi$ \cite{CondurachePopa2025}. The value
$\|\operatorname{Re}\uvec{x}\|=\pi$ is admissible after fixing one of the two
axis signs: this is a branch choice, not a singularity of the exponential
Jacobian. Pure translation is included by the regular zero-angle
continuation. On such a branch,
\[
 \exp\skewop{\uvec{x}}\in SO(3,\D),\qquad
 \log\uvec{R}=\skewop{\uvec{x}}.
\]

\subsection{Body and spatial dual twists}

For a differentiable rigid motion $\uvec{R}(t)$, define
\begin{equation}\label{eq:twists}
 \skewop{\uvec{\omega}_{b}}
   =\uvec{R}^{T}\dot{\uvec{R}},
 \qquad
 \skewop{\uvec{\omega}_{s}}
   =\dot{\uvec{R}}\uvec{R}^{T}.
\end{equation}
The corresponding dual-vector relation is
\begin{equation}\label{eq:adjoint}
 \uvec{\omega}_{s}=\uvec{R}\uvec{\omega}_{b}.
\end{equation}
Differentiating and using
$\dot{\uvec{R}}\uvec{x}=\uvec{R}
(\uvec{\omega}_{b}\times\uvec{x})$ gives
\begin{equation}\label{eq:twist-derivative-transform}
 \dot{\uvec{\omega}}_{s}
 =\uvec{R}\dot{\uvec{\omega}}_{b}
  +\uvec{R}(\uvec{\omega}_{b}\times\uvec{\omega}_{b})
 =\uvec{R}\dot{\uvec{\omega}}_{b}.
\end{equation}
This cancellation will be essential at an interpolation knot.

\section{Temporal differential transform}

Let $\uvec{x}(t)$ be a dual-valued differentiable function. Its first temporal
differential transform is
\begin{equation}\label{eq:temporal-transform}
 \uvec{\breve{x}}(t)
 =\uvec{x}(t)+\eT\dot{\uvec{x}}(t),
 \qquad \eT^2=0.
\end{equation}
Here $\eT$ is distinct from the rigid dual unit $\eD$. The two units commute,
so the global coefficient algebra is
\begin{equation}\label{eq:hyperdual-algebra}
 \mathbb H_{\!D}
 =\R[\eD,\eT]/(\eD^2,\eT^2),
 \qquad \eD\eT\ne0.
\end{equation}
Indeed,
\[
 \uvec{\breve{x}}
 =x+\eD x^{\circ}+\eT\dot x
   +\eD\eT\dot x^{\circ}.
\]
Thus the first differential transform of a dual quantity is hyper-dual. No
hat notation is used at this level; it is reserved for higher-order
extensions.

Because \eqref{eq:temporal-transform} is exact first-order automatic
differentiation, analytic identities prolong functorially:
\begin{equation}\label{eq:functorial}
 \breve{\uvec{F(x)}}=F(\breve{\uvec{x}})
 =F(\uvec{x})+\eT\,DF(\uvec{x})[\dot{\uvec{x}}].
\end{equation}
This applies in particular to the exponential, the logarithm inside an
admissible chart, and the Jacobian used below. Equation
\eqref{eq:functorial} must not be confused with holonomy of an interpolated
hyper-dual curve: arbitrary transformed nodal data need not be the temporal
transform of the curve obtained by projecting their base parts.

\section{The dual Park--Ravani construction}

Let $t_0<\cdots<t_N$ be knot times, $h_i=t_{i+1}-t_i$, and let
$\uvec{R}_i\in SO(3,\D)$ be prescribed rigid poses. Choose one admissible
logarithm branch for every relative displacement, fixing an axis sign when
the real relative angle is $\pi$, and set
\begin{equation}\label{eq:relative-log}
 \skewop{\uvec{s}_i}
 =\log(\uvec{R}_i^T\uvec{R}_{i+1}).
\end{equation}
On the normalized interval $u=(t-t_i)/h_i\in[0,1]$, define
\begin{align}
 \uvec{r}_i(u)&=\uvec{a}_i u^3+\uvec{b}_i u^2+\uvec{c}_i u,
 \label{eq:cubic-coordinate}\\
 \uvec{R}(t)&=\uvec{R}_i
       \exp\skewop{\uvec{r}_i(u)}.\label{eq:segment}
\end{align}
Endpoint interpolation is equivalent to
\begin{equation}\label{eq:a-condition}
 \uvec{a}_i=\uvec{s}_i-\uvec{b}_i-\uvec{c}_i.
\end{equation}

For a dual vector $\uvec{r}$ define the dual right Jacobian
\begin{equation}\label{eq:right-J}
 \uvec{J}(\uvec{r})
 =I-\frac{1-\cos\uvec{q}}{\uvec{q}^{2}}\skewop{\uvec{r}}
 +\frac{\uvec{q}-\sin\uvec{q}}{\uvec{q}^{3}}
       \skewop{\uvec{r}}^{2},
 \qquad \uvec{q}^{2}=\uvec{r}\cdot\uvec{r},
\end{equation}
with the regular analytic continuation at $\uvec{q}=0$. Under the convention
\eqref{eq:twists}, the body dual twist of \eqref{eq:segment} is
\begin{equation}\label{eq:body-twist-segment}
 \uvec{\omega}_{b}(t)
 =\frac1{h_i}\uvec{J}(\uvec{r}_i)\uvec{r}'_i.
\end{equation}
Its time derivative is
\begin{equation}\label{eq:body-derivative-segment}
 \dot{\uvec{\omega}}_{b}(t)
 =\frac1{h_i^2}\left\{
 D\uvec{J}(\uvec{r}_i)[\uvec{r}'_i]\uvec{r}'_i
 +\uvec{J}(\uvec{r}_i)\uvec{r}''_i\right\}.
\end{equation}

Put
\begin{equation}\label{eq:t-u}
 \uvec{\tau}_i=3\uvec{a}_i+2\uvec{b}_i+\uvec{c}_i,
 \qquad
 \uvec{\nu}_i=6\uvec{a}_i+2\uvec{b}_i.
\end{equation}
These are the first two $u$-derivatives of $\uvec r_i$ at $u=1$.

\begin{proposition}[Nonuniform dual Park--Ravani recurrence]
\label{prop:recurrence}
Assume that the initial body dual twist
$\uvec{\omega}_{b,0}$ and its derivative
$\dot{\uvec{\omega}}_{b,0}$ are prescribed. Set
\begin{align}
 \uvec{c}_0&=h_0\uvec{\omega}_{b,0},&
 \uvec{b}_0&=\frac{h_0^2}{2}\dot{\uvec{\omega}}_{b,0},&
 \uvec{a}_0&=\uvec{s}_0-\uvec{b}_0-\uvec{c}_0.
 \label{eq:first-coefficients}
\end{align}
For $i=0,\ldots,N-2$, define successively
\begin{align}
 \uvec{c}_{i+1}
 &=\frac{h_{i+1}}{h_i}\,
   \uvec{J}(\uvec{s}_i)\uvec{\tau}_i,
 \label{eq:c-recurrence}\\
 2\uvec{b}_{i+1}
 &=\frac{h_{i+1}^2}{h_i^2}
 \left\{D\uvec{J}(\uvec{s}_i)[\uvec{\tau}_i]\uvec{\tau}_i
       +\uvec{J}(\uvec{s}_i)\uvec{\nu}_i\right\},
 \label{eq:b-recurrence}\\
 \uvec{a}_{i+1}
 &=\uvec{s}_{i+1}-\uvec{b}_{i+1}-\uvec{c}_{i+1}.
 \label{eq:a-recurrence}
\end{align}
Then \eqref{eq:segment} interpolates every $\uvec R_i$, and its body dual
twist and body dual-twist derivative are continuous at every internal knot.
\end{proposition}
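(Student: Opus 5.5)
The argument works segment by segment. Interpolation is immediate from \eqref{eq:a-condition}. Continuity at a knot $t_{i+1}$ is checked by comparing the left limits of \eqref{eq:body-twist-segment}--\eqref{eq:body-derivative-segment}, taken from segment $i$ at $u=1$, with the right limits, taken from segment $i+1$ at $u=0$.

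\emph{Interpolation.} At $u=0$ we have $\uvec r_i(0)=0$, so $\uvec R(t_i)=\uvec R_i$. At $u=1$, \eqref{eq:a-condition} gives $\uvec r_i(1)=\uvec s_i$, and the admissible branch \eqref{eq:relative-log} satisfies $\exp\skewop{\uvec s_i}=\uvec R_i^T\uvec R_{i+1}$. Hence $\uvec R(t_{i+1}^-)=\uvec R_{i+1}$, and the two segments meeting at a knot agree there. The coefficients \eqref{eq:first-coefficients} and \eqref{eq:a-recurrence} are of exactly this form, so interpolation holds for every segment.

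\emph{Derivative formulas.} I first confirm \eqref{eq:body-twist-segment} and \eqref{eq:body-derivative-segment}. The real identity $\exp(-\skewop{r})\,d\exp\skewop{r}=\skewop{J(r)\,dr}$ is an analytic identity in $r$. By the transference discussion of Section~2 (analytic continuation on an admissible branch) it holds verbatim for dual $\uvec r$, with $\uvec J$ given by \eqref{eq:right-J}; its removable singularity at $\uvec q=0$ is handled by the regular continuation. Left multiplication by the constant $\uvec R_i$ does not change the body twist. The chain rule $d/dt=h_i^{-1}d/du$ then gives \eqref{eq:body-twist-segment}. Differentiating once more, with the product rule and the Fr\'echet derivative $D\uvec J$, gives \eqref{eq:body-derivative-segment}.

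\emph{Matching at a knot.} On segment $i$ at $u=1$ we have $\uvec r_i=\uvec s_i$, $\uvec r_i'=\uvec\tau_i$ and $\uvec r_i''=\uvec\nu_i$, so
\[
 \uvec\omega_b(t_{i+1}^-)=h_i^{-1}\uvec J(\uvec s_i)\uvec\tau_i,\qquad
 \dot{\uvec\omega}_b(t_{i+1}^-)=h_i^{-2}\{D\uvec J(\uvec s_i)[\uvec\tau_i]\uvec\tau_i+\uvec J(\uvec s_i)\uvec\nu_i\}.
\]
On segment $i+1$ at $u=0$ we have $\uvec r_{i+1}=0$, $\uvec J(0)=I$, $\uvec r'_{i+1}(0)=\uvec c_{i+1}$ and $\uvec r''_{i+1}(0)=2\uvec b_{i+1}$. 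The remaining ingredient is $D\uvec J(0)[\uvec c]\uvec c=0$. To see it, note that the first-order term of $\uvec J$ at the origin is $-\tfrac12\skewop{\uvec r}$, so $D\uvec J(0)[\uvec c]=-\tfrac12\skewop{\uvec c}$, and this annihilates $\uvec c$ because $\uvec c\times\uvec c=0$ for dual vectors (Section~2). Therefore
\[
 \uvec\omega_b(t_{i+1}^+)=h_{i+1}^{-1}\uvec c_{i+1},\qquad
 \dot{\uvec\omega}_b(t_{i+1}^+)=h_{i+1}^{-2}\,2\uvec b_{i+1}.
\]
Equating the left and right limits is exactly \eqref{eq:c-recurrence} and \eqref{eq:b-recurrence}. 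The same computation at $t_0$ shows that \eqref{eq:first-coefficients} realizes the prescribed initial data $\uvec\omega_{b,0}$ and $\dot{\uvec\omega}_{b,0}$.

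\emph{Main obstacle.} The delicate points are justifying the dual right-Jacobian identity and the evaluation of $D\uvec J$ at $0$ and at $\uvec s_i$. This requires $\uvec J$ to be analytic along the whole segment, including through $\uvec q=0$ and when $\operatorname{Re}\uvec s_i$ has norm $\pi$. I would handle this by writing the coefficients of \eqref{eq:right-J} as entire power series in $\uvec q^2$, namely $\sum(-1)^k\uvec q^{2k}/(2k+2)!$ and $\sum(-1)^k\uvec q^{2k}/(2k+3)!$. Being power series in $\uvec q^2=\uvec r\cdot\uvec r$, they are dual-analytic everywhere. The Jacobian identity itself is differential and does not require $\exp$ to be locally invertible on the segment; only the endpoint identification uses the branch choice.
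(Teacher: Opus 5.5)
Your proposal is correct and follows the paper's proof. You evaluate \eqref{eq:body-twist-segment}--\eqref{eq:body-derivative-segment} at $u=1$ on segment $i$ and at $u=0$ on segment $i+1$, using $\uvec J(0)=I$ and $D\uvec J(0)[\uvec c]\uvec c=0$, equate the limits to recover \eqref{eq:c-recurrence}--\eqref{eq:b-recurrence}, and obtain interpolation from \eqref{eq:a-condition}; your extra steps (the dual right-Jacobian identity by transference, $D\uvec J(0)[\uvec c]=-\tfrac12\skewop{\uvec c}$, and the coefficients of $\uvec J$ as entire series in $\uvec q^2$) only make explicit what the paper takes for granted.
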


\begin{proof}
At $u=0$, $\uvec J(0)=I$ and
$D\uvec J(0)[\uvec c]\uvec c=0$. Equations
\eqref{eq:body-twist-segment}--\eqref{eq:body-derivative-segment} therefore
give the initial data \eqref{eq:first-coefficients}. At $u=1$,
$\uvec r_i=\uvec s_i$, $\uvec r_i'=\uvec\tau_i$, and
$\uvec r_i''=\uvec\nu_i$. The left limits of the twist and its derivative
are consequently
\[
 \frac1{h_i}\uvec J(\uvec s_i)\uvec\tau_i,
 \qquad
 \frac1{h_i^2}\left\{
 D\uvec J(\uvec s_i)[\uvec\tau_i]\uvec\tau_i
 +\uvec J(\uvec s_i)\uvec\nu_i\right\}.
\]
The corresponding right limits on segment $i+1$ are
$\uvec c_{i+1}/h_{i+1}$ and
$2\uvec b_{i+1}/h_{i+1}^2$. Their equality is precisely
\eqref{eq:c-recurrence}--\eqref{eq:b-recurrence}. Finally,
\eqref{eq:a-recurrence} gives $\uvec r_{i+1}(1)=\uvec s_{i+1}$, hence pose
interpolation.
\end{proof}

\section{Exact dual transfer}

Write a complete set of real Park--Ravani input data as $Y$ and denote the
resulting real spline construction by $\mathcal P(Y)$. Its dual continuation
is not a new independently postulated algorithm.

\begin{theorem}[Dual transfer principle]\label{thm:transfer}
Let $Y+\eD Y^{\circ}$ be dual input data lying in a product of admissible
logarithm charts. Then
\begin{equation}\label{eq:transfer}
 \mathcal P_{\D}(Y+\eD Y^{\circ})
 =\mathcal P(Y)+\eD\,D\mathcal P(Y)[Y^{\circ}].
\end{equation}
In particular, every analytic identity used in the real Park--Ravani
construction, including its recurrence and frame covariance, holds over the
dual algebra with exactly the same form.
\end{theorem}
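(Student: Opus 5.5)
The plan is to show that $\mathcal P$ is a finite composition of real-analytic maps, and then to use the fact that every real-analytic map extends uniquely to dual arguments via its first-order Taylor expansion. The construction $Y\mapsto\mathcal P(Y)$ decomposes into elementary maps:
\begin{itemize}
\item the group operations $(\uvec R_i,\uvec R_{i+1})\mapsto\uvec R_i^T\uvec R_{i+1}$;
\item the logarithm on an admissible chart, giving $\uvec s_i$ in \eqref{eq:relative-log};
\item the linear coefficient maps \eqref{eq:first-coefficients}, \eqref{eq:t-u} and \eqref{eq:a-recurrence};
\item the nonlinear maps $(\uvec s,\uvec\tau,\uvec\nu)\mapsto \uvec J(\uvec s)\uvec\tau$ and $D\uvec J(\uvec s)[\uvec\tau]\uvec\tau+\uvec J(\uvec s)\uvec\nu$ from \eqref{eq:c-recurrence}--\eqref{eq:b-recurrence};
\item finally, the segment evaluation $(\uvec R_i,\uvec a_i,\uvec b_i,\uvec c_i,u)\mapsto \uvec R_i\exp\skewop{\uvec r_i(u)}$ of \eqref{eq:segment}.
\end{itemize}
Each of these is a polynomial, or a convergent power series in its real arguments. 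This holds in particular for $\exp$, for the coefficient functions $(1-\cos q)/q^2$ and $(q-\sin q)/q^3$, which are entire functions of $q^2=\uvec r\cdot\uvec r$, and for $\log$ on the chart where the real rotation angle avoids nonzero multiples of $2\pi$.

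The key lemma is the following. For a real-analytic $F$ on an open set $U$, its dual continuation, defined by substituting dual arguments into the power series and truncating with $\eD^2=0$, satisfies
\[
F(x+\eD x^\circ)=F(x)+\eD\,DF(x)[x^\circ].
\]
This is the same computation as \eqref{eq:functorial} with $\eT$ replaced by $\eD$: expand each monomial, keep terms linear in $\eD$, and recognize the Fréchet derivative. Convergence is inherited because only the real part enters the series radius, and the admissibility hypothesis guarantees that the real parts stay in $U$. The chain rule then shows that dual continuation commutes with composition: if $F$ and $G$ satisfy the lemma, so does $G\circ F$, since
\[
G(F(x)+\eD DF(x)[x^\circ])=G(F(x))+\eD\,DG(F(x))DF(x)[x^\circ].
\]
Applying this inductively along the recurrence, for $i=0,\dots,N-2$ and then segment by segment, yields \eqref{eq:transfer}.

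For the ``in particular'' clause, I argue that any analytic identity $\Phi(Y)=0$ valid on an open set has $D\Phi\equiv0$ there. Its dual continuation therefore satisfies $\Phi(Y+\eD Y^\circ)=0+\eD\cdot0$. This covers the cancellation in \eqref{eq:twist-derivative-transform}, the interpolation and continuity identities of Proposition~\ref{prop:recurrence}, and frame covariance. Covariance is the identity $\mathcal P(AYB)=A\,\mathcal P(Y)\,B$, with the group elements $A,B$ regarded as additional analytic inputs.

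I expect the main obstacle to be the logarithm. One must check that the dual continuation of $\log$ is well defined and agrees with a genuine local inverse of the dual exponential. This requires that $D\exp$ be invertible at the real base point, which holds when $\|\operatorname{Re}\uvec s\|$ is not a nonzero multiple of $2\pi$; this is exactly the admissibility condition, with the right Jacobian $J$ nonsingular there. The angle-$\pi$ case also needs care: the chosen axis sign must define an analytic branch on a neighbourhood, so that the lemma applies.
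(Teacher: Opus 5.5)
Your proposal is correct and follows essentially the same route as the paper. Both arguments write $\mathcal P$ as a finite composition of bilinear operations, $\exp$, the local logarithm and the analytic functions entering $\uvec J$, apply $F(X+\eD X^{\circ})=F(X)+\eD\,DF(X)[X^{\circ}]$ to each piece, conclude by the chain rule, and transfer identities because their first derivatives vanish. Your additions make explicit points the paper leaves implicit (logarithm admissibility as invertibility of the right Jacobian, the coefficients of $\uvec J$ as entire functions of $q^{2}$), but one claim needs adjusting: identities such as $\exp\circ\log=\mathrm{id}$ hold on $SO(3)$, not on an open set of matrices. So the vanishing of $D\Phi$ should be taken along tangent directions, which are exactly the dual parts of elements of $SO(3,\D)$, or else in chart coordinates.
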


\begin{proof}
Addition, multiplication, contraction, and cross product extend by dual
bilinearity. For every analytic scalar or tensor function $F$,
\[
 F(X+\eD X^{\circ})=F(X)+\eD\,DF(X)[X^{\circ}],
\]
because $\eD^2=0$. The Park--Ravani construction is a finite composition of
these operations with exponential, local logarithm, and the analytic
functions entering $J$. The chain rule therefore gives
\eqref{eq:transfer}. Applying the same continuation to any real analytic
identity proves the final assertion.
\end{proof}

\begin{remark}
The theorem is an exact first-order transfer in the rigid dual direction. It
does not assert a variational minimum in a lexicographically ordered dual
functional. Such a statement requires a separate definition and proof.
\end{remark}

\begin{remark}[Forward propagation]
As in the original Park--Ravani construction, the recurrence propagates the
initial twist data forward and imposes no terminal condition. Over long knot
sequences, coefficient growth may therefore occur. The present paper
establishes the exact transferred recurrence and its physical continuity
properties, not a global conditioning result.
\end{remark}

\section{Continuity of the physical acceleration field}

Let the spatial dual twist and its derivative be decomposed as
\begin{equation}\label{eq:spatial-components}
 \uvec{\omega}_{s}=\omega+\eD v,
 \qquad
 \dot{\uvec{\omega}}_{s}=\dot\omega+\eD\dot v.
\end{equation}
The vector $\dot v$ is the reduced acceleration of the instantaneous point of
the rigid body passing through the origin. It is not, by itself, the physical
linear acceleration of a fixed material point. For a spatial point with
position vector $\rho$, define
\begin{align}
 \Phi_2&=\skewop{\dot\omega}+\skewop{\omega}^{2},
 \label{eq:Phi2}\\
 a_2&=\dot v+\omega\times v,
 \label{eq:a2}\\
 a_{\rho}&=a_2+\Phi_2\rho.
 \label{eq:physical-field}
\end{align}
Equation \eqref{eq:physical-field} is the physical acceleration field of the
rigid body \cite{Condurache2022}.

\begin{theorem}[Acceleration-field continuity]
\label{thm:physical-continuity}
The dual Park--Ravani spline generated by Proposition
\ref{prop:recurrence} has a continuous physical acceleration field at every
internal knot. More explicitly, for each fixed knot time $t_i$, the left and
right limits of $a_{\rho}$ coincide for every $\rho\in\R^3$.
\end{theorem}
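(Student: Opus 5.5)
The plan is to reduce continuity of the physical field $a_\rho$ to continuity of the spatial dual twist $\uvec\omega_s$ and of its derivative $\dot{\uvec\omega}_s$ at each internal knot. By \eqref{eq:Phi2}--\eqref{eq:physical-field}, $a_\rho$ is a polynomial, hence continuous, function of the four real vectors $\omega,v,\dot\omega,\dot v$ and of the fixed vector $\rho$. By \eqref{eq:spatial-components}, these four vectors are exactly the real and dual parts of $\uvec\omega_s$ and $\dot{\uvec\omega}_s$. Therefore, if the left and right limits of $\uvec\omega_s$ and $\dot{\uvec\omega}_s$ coincide at $t_i$, then the left and right limits of $\Phi_2$ and $a_2$ coincide, and so do those of $a_\rho$, for every $\rho$ simultaneously.

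The key steps are as follows.

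First, the pose $\uvec R(t)$ is continuous at $t_i$. Proposition~\ref{prop:recurrence} gives interpolation: the left segment ends at $\uvec R_i\exp\skewop{\uvec s_{i-1}}$. On the chosen admissible branch this equals $\uvec R_{i-1}\uvec R_{i-1}^T\uvec R_i=\uvec R_i$, which is also the starting value of the right segment, since $\uvec r_i(0)=0$.

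Second, Proposition~\ref{prop:recurrence} gives equality of the one-sided limits of $\uvec\omega_b$ and $\dot{\uvec\omega}_b$ at $t_i$.

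Third, the adjoint relation \eqref{eq:adjoint} gives $\uvec\omega_s=\uvec R\uvec\omega_b$. Since both factors have matching one-sided limits, so does $\uvec\omega_s$.

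Fourth, for the derivative I will invoke \eqref{eq:twist-derivative-transform}, namely $\dot{\uvec\omega}_s=\uvec R\dot{\uvec\omega}_b$. This identity holds pointwise on the open interior of each segment, where the curve is analytic. It therefore passes to one-sided limits at $t_i$, and continuity of $\dot{\uvec\omega}_s$ follows from the matching limits of $\uvec R$ and $\dot{\uvec\omega}_b$.

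The step needing the most care is this fourth one. A priori, $\dot{\uvec\omega}_s$ also contains the term $\dot{\uvec R}\uvec\omega_b$, which involves the derivative of the pose, so one might worry that its limits need separate treatment. The cancellation $\uvec\omega_b\times\uvec\omega_b=0$ in \eqref{eq:twist-derivative-transform} removes this term entirely. I will stress that this relies only on dual bilinearity of the cross product and on the twists being computed within a single segment. The identity is therefore applied to each one-sided limit separately, never across the knot.

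Finally, I will remark that the dual part $\dot v$ alone would already be continuous. The real content of the theorem is that the correction $\omega\times v$ in $a_2$ and the field term $\Phi_2\rho$ are continuous as well. This follows because $\omega$, $v$ and $\dot\omega$ are themselves continuous by the third and fourth steps.
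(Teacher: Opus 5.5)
Your proposal is correct and follows the paper's own proof essentially step for step: Proposition~\ref{prop:recurrence} gives matching one-sided limits of $\uvec R$, $\uvec\omega_b$ and $\dot{\uvec\omega}_b$; \eqref{eq:adjoint} and \eqref{eq:twist-derivative-transform} transfer these to the spatial pair $(\uvec\omega_s,\dot{\uvec\omega}_s)$; and the real and dual parts are then substituted into \eqref{eq:Phi2}--\eqref{eq:physical-field}. The only slip is an index typo in your first step: the left segment ends at $\uvec R_{i-1}\exp\skewop{\uvec s_{i-1}}$, not $\uvec R_i\exp\skewop{\uvec s_{i-1}}$, as your very next equality already assumes.
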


\begin{proof}
Proposition \ref{prop:recurrence} gives equality at the knot of the left and
right limits of $\uvec R$, $\uvec\omega_b$, and
$\dot{\uvec\omega}_b$. Equations \eqref{eq:adjoint} and
\eqref{eq:twist-derivative-transform} imply equality of the corresponding
spatial pair
\[
 (\uvec\omega_s,\dot{\uvec\omega}_s).
\]
Taking real and dual parts in \eqref{eq:spatial-components} yields identical
left and right values of $\omega$, $v$, $\dot\omega$, and $\dot v$.
Therefore \eqref{eq:Phi2} and \eqref{eq:a2} give identical $\Phi_2$ and $a_2$.
Substitution in \eqref{eq:physical-field} proves equality of $a_\rho$ for
every spatial point $\rho$.
\end{proof}

\begin{remark}
Continuity of the dual-twist derivative alone is not the appropriate physical
interpretation. For $n=2$, the pair
$(\uvec\omega_s,\dot{\uvec\omega}_s)$ uniquely determines the complete
physical acceleration field; the dual Park--Ravani recurrence makes this pair
continuous.
\end{remark}

\section{Holonomy and its Hermite repair}\label{sec:holonomy}

\subsection{The first-order holonomy condition}

Let a curve with coefficients in the hyper-dual algebra
\eqref{eq:hyperdual-algebra} be written as
\begin{equation}\label{eq:generic-prolonged-curve}
 \uvec{\breve R}(t)=\uvec R_0(t)+\eT\uvec R_1(t).
\end{equation}
Its base projection is $\pi_0\uvec{\breve R}=\uvec R_0$. If
$\uvec{\breve R}$ is orthogonal over the hyper-dual algebra, then
$\uvec R_0\in SO(3,\D)$ and $\uvec R_0^T\uvec R_1$ is dual skew. This
tangency condition is necessary but does not say that the $\eT$ coefficient
is the time derivative of the base curve.

\begin{definition}[First-order holonomy]
The curve \eqref{eq:generic-prolonged-curve} is temporally holonomic if
\begin{equation}\label{eq:holonomy-condition}
 \boxed{\uvec R_1(t)=\dot{\uvec R}_0(t)}
\end{equation}
throughout its interval. Equivalently,
$\uvec{\breve R}=\uvec R_0+\eT\dot{\uvec R}_0$ is the temporal differential
transform of its base projection.
\end{definition}

Define the tensor defect and its body-trivialized dual vector by
\begin{align}
 \uvec\Delta(t)&=\uvec R_1(t)-\dot{\uvec R}_0(t),
 \label{eq:tensor-defect}\\
 \skewop{\uvec\delta_b(t)}&=\uvec R_0(t)^T\uvec\Delta(t).
 \label{eq:body-defect}
\end{align}
For an orthogonal hyper-dual curve, the right-hand side of
\eqref{eq:body-defect} is dual skew. Holonomy is equivalent to
$\uvec\delta_b\equiv0$. Under a constant change of fixed frame the body defect
is unchanged; under a constant change of moving frame it transforms by the
corresponding dual orthogonal action.

\begin{proposition}[Algebraic interpolation does not imply holonomy]
\label{prop:nonholonomy}
Applying the dual Park--Ravani formulas coefficientwise to individually
prolonged nodal data does not, in general, produce the temporal differential
transform of the spline obtained from the base data.
\end{proposition}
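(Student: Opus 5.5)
The proposition is an existence statement (``does not, in general''), so one explicit counterexample suffices. The mechanism is simple. By Theorem~\ref{thm:transfer}, applied now in the temporal direction, the hyper-dual spline built from nodal data $\uvec{\breve R}_i=\uvec R_i+\eT\uvec R_{i,1}$ (and initial twist data carrying their own $\eT$ parts) has $\eT$ coefficient
\[
D\mathcal P(Y)[Y_1],
\]
the directional derivative of the construction with respect to the \emph{nodal data}. The holonomy condition \eqref{eq:holonomy-condition} instead requires $\eT$ coefficient $\partial_t\mathcal P(Y)(t)$, the derivative with respect to \emph{time}. These two objects are unrelated in general. Prescribing arbitrary $Y_1$ at the knots only fixes values at the $t_i$; it does not make the interpolated $\eT$ part agree with $\dot{\uvec R}_0$ between or even at knots.

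I would build the simplest counterexample: one segment ($N=1$) with real, or even trivial, base data. Take $\uvec R_0=\uvec R_1=I$ as base poses and zero initial twist data. Then $\uvec s_0=0$, all base coefficients vanish, and the base spline is $\uvec R_0(t)\equiv I$, so $\dot{\uvec R}_0\equiv 0$.

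Next prolong the nodal data nontrivially but consistently with orthogonality. Keep the node $\uvec{\breve R}_0=I$, set
\[
\uvec{\breve R}_1=\exp\skewop{\eT w}=I+\eT\skewop{w}
\]
for some $w\ne0$, and leave the initial twist and its derivative without $\eT$ parts. Coefficientwise, $\breve{\uvec s}_0=\eT w$, $\breve{\uvec c}_0=\breve{\uvec b}_0=0$ and $\breve{\uvec a}_0=\eT w$. The segment \eqref{eq:segment} is then
\[
\uvec{\breve R}(t)=\exp\skewop{\eT w u^3}=I+\eT u^3\skewop{w},
\]
using $\eT^2=0$. Its $\eT$ coefficient is $u^3\skewop{w}\ne0$ for $u>0$, while $\dot{\uvec R}_0=0$. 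The body defect \eqref{eq:body-defect} is therefore
\[
\uvec\delta_b=u^3 w\neq0,
\]
which establishes the proposition. The curve is still orthogonal over $\mathbb H_{\!D}$, so it illustrates the remark that the tangency condition does not imply holonomy.

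The main obstacle is making the counterexample convincing rather than degenerate. A reader might object that the nodal data were not themselves prolongations of anything. So I would also present a version in which every nodal datum is individually a legitimate temporal transform, $\uvec{\breve R}_i=\uvec R_i+\eT\dot{\uvec R}(t_i)$, for some smooth motion through the nodes, for instance a motion whose velocity at $t_1$ differs from the Park--Ravani spline's velocity there. The defect then already appears at the knot $t_1$: $\uvec R_1(t_1)$ is the prescribed $\dot{\uvec R}(t_1)$, while $\dot{\uvec R}_0(t_1)$ is the spline velocity $\uvec R_1\,\uvec J(\uvec s_0)\uvec\tau_0/h_0$. These differ whenever the prescribed twist differs from the spline's left-limit twist, which holds for generic data (already in the trivial-base example above, where the spline twist is $0$). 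Checking this inequality is a short computation.
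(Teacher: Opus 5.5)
Your proof is correct. It uses the mirror image of the paper's counterexample.

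\textbf{The two examples.} The paper keeps every $\eT$ coefficient zero and lets the base move. In the one-parameter family $\exp\skewop{x(u)\uvec e}$ with $x(0)=0$, $x(1)=1$ and zero initial data, the base cubic is $x(u)=u^3$. The coefficientwise construction returns $u^3+\eT\,0$, whereas the holonomic jet is $u^3+3\eT u^2$, so the defect is $-3u^2$. You do the opposite: the base is static ($\equiv I$) and the only nonzero $\eT$ datum sits at the terminal node. The construction then produces $I+\eT u^3\skewop{w}$ against a zero base velocity, giving body defect $u^3w$. Both examples are commutative, and in both the defect vanishes only at $u=0$.

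\textbf{What each route buys.} Your reading of the $\eT$ coefficient as $D\mathcal P(Y)[Y_1]$, a derivative with respect to the data, rather than $\partial_t\mathcal P(Y)$, a derivative with respect to time, is a real gain. It explains why the failure is generic, and it covers both examples at once. The paper's case is $Y_1=0$ with $\partial_t\mathcal P(Y)\neq0$; yours is $\partial_t\mathcal P(Y)=0$ with $D\mathcal P(Y)[Y_1]\neq0$. The paper's choice makes the more pointed observation that even the most naive prolongation, with all temporal coefficients zero, already fails. Its scalar data are also reused to illustrate the cubic Hermite repair $x_H=3u^2-2u^3$.

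\textbf{Unneeded step and small slips.} Your second ``legitimacy'' version is not needed. $I+\eT\skewop{w}$ is already the temporal transform at $t_1$ of $\exp\skewop{(t-t_1)w}$. More generally, every orthogonal hyper-dual node $\uvec R+\eT\uvec R\skewop{\uvec x}$ is individually a jet, which is exactly the paper's remark after the proof. The spline velocity at $t_1$ should read $\uvec R_1\skewop{\uvec J(\uvec s_0)\uvec\tau_0}/h_0$; the skew bracket is missing. Using $\uvec R_0,\uvec R_1$ both for the nodal poses and for the base and $\eT$ components of the curve invites confusion.
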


\begin{proof}
It is sufficient to work in a one-parameter, hence commutative, subgroup. Let
$u=t\in[0,1]$ and fix a nonzero dual vector $\uvec e$. Write
\[
 \uvec R_0(u)=\exp\skewop{x(u)\uvec e}.
\]
Take base endpoint coordinates $x(0)=0$, $x(1)=1$ and zero initial first and
second derivatives. The Park--Ravani cubic is
\begin{equation}\label{eq:counter-base}
 x(u)=u^3.
\end{equation}
Attach zero temporal coefficients to both endpoint coordinates and to the
initial derivative data. Coefficientwise application of the same construction
gives
\begin{equation}\label{eq:counter-prolonged}
 \breve x(u)=u^3+\eT\,0.
\end{equation}
The temporal differential transform of the base interpolant is instead
\begin{equation}\label{eq:counter-holonomic}
 x(u)+\eT x'(u)=u^3+3\eT u^2.
\end{equation}
Thus the scalar defect is $-3u^2$, and the tensor defect is nonzero for every
$u\in(0,1]$. The failure already occurs in a commutative subgroup, so
noncommutativity is not its cause.
\end{proof}

\begin{remark}
Each nodal object in the counterexample is individually a legitimate temporal
transform. The obstruction is global: the independently assigned nodal
coefficients are not the jet of the base spline selected by the interpolation
algorithm. Functorial prolongation \eqref{eq:functorial} and holonomy are
therefore distinct statements.
\end{remark}

\subsection{Cubic dual-logarithmic Hermite interpolation}

Consider one interval $[t_i,t_{i+1}]$, put
$u=(t-t_i)/h_i$, and let $\uvec s_i$ be given by
\eqref{eq:relative-log}, with
$\|\operatorname{Re}\uvec s_i\|\notin
2\pi\mathbb Z\setminus\{0\}$. Suppose the body dual twists
$\uvec\omega_{b,i}$ and $\uvec\omega_{b,i+1}$ are prescribed. Define
\begin{equation}\label{eq:cubic-log-data}
 \uvec d_0=h_i\uvec\omega_{b,i},\qquad
 \uvec d_1=h_i\uvec J(\uvec s_i)^{-1}\uvec\omega_{b,i+1}.
\end{equation}
The inverse in \eqref{eq:cubic-log-data} is well defined in the selected
admissible logarithm chart. Indeed, if
$\uvec J=J+\eD J^{\circ}$ and the real Jacobian $J$ is nonsingular, then
\begin{equation}\label{eq:dual-J-inverse}
 \uvec J^{-1}=J^{-1}-\eD J^{-1}J^{\circ}J^{-1}.
\end{equation}
The cubic dual-vector polynomial
\begin{align}
 \uvec r_H(u)={}&(3u^2-2u^3)\uvec s_i
 +(u^3-2u^2+u)\uvec d_0\notag\\
 &+(u^3-u^2)\uvec d_1
 \label{eq:cubic-hermite}
\end{align}
satisfies
\[
 \uvec r_H(0)=0,\quad \uvec r_H(1)=\uvec s_i,
 \quad \uvec r_H'(0)=\uvec d_0,
 \quad \uvec r_H'(1)=\uvec d_1.
\]

\begin{theorem}[Holonomic cubic repair]\label{thm:cubic-repair}
The curve
\begin{equation}\label{eq:cubic-base-curve}
 \uvec R_H(t)=\uvec R_i\exp\skewop{\uvec r_H(u)}
\end{equation}
interpolates the two dual poses and the two prescribed body dual twists. Its
temporal differential transform
\begin{equation}\label{eq:cubic-prolongation}
 \uvec{\breve R}_H(t)=\uvec R_H(t)+\eT\dot{\uvec R}_H(t)
\end{equation}
is identically holonomic.
\end{theorem}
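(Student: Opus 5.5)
The proof splits into two independent claims: Hermite interpolation of poses and twists, and holonomy of the prolongation. I would treat them separately.

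\emph{Interpolation.} At $u=0$ we have $\uvec r_H(0)=0$, so $\uvec R_H(t_i)=\uvec R_i\exp 0=\uvec R_i$. At $u=1$ we have $\uvec r_H(1)=\uvec s_i$, so by the admissible branch choice in \eqref{eq:relative-log},
\[
\uvec R_H(t_{i+1})=\uvec R_i\exp\skewop{\uvec s_i}=\uvec R_i\uvec R_i^T\uvec R_{i+1}=\uvec R_{i+1}.
\]
For the twists I would reuse \eqref{eq:body-twist-segment}, which holds for any curve of the form $\uvec R_i\exp\skewop{\uvec r(u)}$ and not only for cubics with the Park--Ravani coefficients. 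Its derivation is the dual transfer (Theorem~\ref{thm:transfer}) of the real identity $\exp(-\skewop{r})\,\tfrac{d}{du}\exp\skewop{r}=\skewop{J(r)r'}$. At $u=0$ it gives $\uvec\omega_b(t_i)=h_i^{-1}\uvec J(0)\uvec d_0=h_i^{-1}\uvec d_0=\uvec\omega_{b,i}$. At $u=1$ it gives
\[
h_i^{-1}\uvec J(\uvec s_i)\uvec d_1=\uvec J(\uvec s_i)\uvec J(\uvec s_i)^{-1}\uvec\omega_{b,i+1}=\uvec\omega_{b,i+1}.
\]
The only point needing care is that this inverse exists. That follows from \eqref{eq:dual-J-inverse} once one knows the real right Jacobian $J(s)$ is invertible. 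Its eigenvalues are $1$ and $(1-e^{\mp i\theta})/(\pm i\theta)$, which vanish exactly when $\theta\in2\pi\mathbb Z\setminus\{0\}$, and this case is excluded by hypothesis.

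\emph{Holonomy.} $\uvec R_H$ is a composition of a polynomial in $t$ with the dual exponential and a constant left multiplication, so it is a smooth (indeed real-analytic) $SO(3,\D)$-valued function of $t$. Hence $\dot{\uvec R}_H$ exists on the whole interval. In the notation of \eqref{eq:generic-prolonged-curve}, the prolongation \eqref{eq:cubic-prolongation} has $\uvec R_0=\uvec R_H$ and $\uvec R_1=\dot{\uvec R}_H$. Therefore the tensor defect \eqref{eq:tensor-defect} vanishes identically, $\uvec\delta_b\equiv0$, and \eqref{eq:holonomy-condition} holds by construction.

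For completeness I would also check that $\uvec{\breve R}_H$ is orthogonal over $\mathbb H_{\!D}$, so that it is a legitimate curve in the setting of Section~\ref{sec:holonomy}. This follows from the functorial prolongation \eqref{eq:functorial} applied to the identity $\uvec R_H^T\uvec R_H=I$: the $\eT$-part of that identity is its time derivative, which is $0$. Equivalently, $\uvec{\breve R}_H=\uvec R_i\exp\skewop{\uvec r_H+\eT\dot{\uvec r}_H}$ by \eqref{eq:functorial}. This second form also explains the contrast with Proposition~\ref{prop:nonholonomy}. Here the temporal data are obtained by differentiating the single base curve chosen by the interpolation, rather than being assigned nodewise and then interpolated coefficientwise.

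The main obstacle is modest: justifying that \eqref{eq:body-twist-segment} applies to the Hermite polynomial and that $\uvec J(\uvec s_i)$ is invertible in the chart. The holonomy assertion is essentially definitional once smoothness is noted.
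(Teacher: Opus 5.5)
Your proposal is correct and follows the paper's proof: pose interpolation from $\uvec r_H(0)=0$ and $\uvec r_H(1)=\uvec s_i$, the twist formula \eqref{eq:body-twist-segment} at $u=0$ (where $\uvec J(0)=I$) and at $u=1$ (where $\uvec J(\uvec s_i)$ cancels its inverse in $\uvec d_1$), and holonomy holding by construction because the $\eT$-coefficient is defined as $\dot{\uvec R}_H$. Your extra checks are valid refinements of points the paper treats only in the surrounding text or leaves implicit: the eigenvalue argument that the real right Jacobian is nonsingular unless the angle lies in $2\pi\mathbb Z\setminus\{0\}$, and the hyper-dual orthogonality of the prolongation written as $\uvec R_i\exp\skewop{\uvec r_H+\eT\dot{\uvec r}_H}$.
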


\begin{proof}
The endpoint values of $\uvec r_H$ give the two poses. At $u=0$,
$\uvec J(0)=I$, and \eqref{eq:body-twist-segment} gives
$\uvec\omega_b(t_i)=\uvec d_0/h_i=\uvec\omega_{b,i}$. At $u=1$ it gives
\[
 \uvec\omega_b(t_{i+1})
 =h_i^{-1}\uvec J(\uvec s_i)\uvec d_1
 =\uvec\omega_{b,i+1}.
\]
Equation \eqref{eq:cubic-prolongation} satisfies
\eqref{eq:holonomy-condition} by construction, so its defect
\eqref{eq:tensor-defect} vanishes identically.
\end{proof}

For the scalar data in Proposition \ref{prop:nonholonomy}, with zero endpoint
velocities, the repaired base coordinate and its prolongation are
\[
 x_H(u)=3u^2-2u^3,
 \qquad
 \breve x_H(u)=3u^2-2u^3+\eT(6u-6u^2).
\]
They have zero defect and match both prescribed endpoint velocities.

\subsection{Quintic repair for prescribed accelerations}

If the body dual-twist derivatives are prescribed at both endpoints, set
\begin{align}
 \uvec e_0&=h_i^2\dot{\uvec\omega}_{b,i},
 \label{eq:e0-data}\\
 \uvec e_1&=\uvec J(\uvec s_i)^{-1}
 \left\{h_i^2\dot{\uvec\omega}_{b,i+1}
 -D\uvec J(\uvec s_i)[\uvec d_1]\uvec d_1\right\}.
 \label{eq:e1-data}
\end{align}
Introduce the quintic Hermite basis
\begin{align*}
 H_{01}&=10u^3-15u^4+6u^5,\\
 H_{10}&=u-6u^3+8u^4-3u^5,\\
 H_{11}&=-4u^3+7u^4-3u^5,\\
 H_{20}&=\tfrac12(u^2-3u^3+3u^4-u^5),\\
 H_{21}&=\tfrac12(u^3-2u^4+u^5),
\end{align*}
and define
\begin{equation}\label{eq:quintic-hermite}
 \uvec r_Q(u)=H_{01}\uvec s_i+H_{10}\uvec d_0
 +H_{11}\uvec d_1+H_{20}\uvec e_0+H_{21}\uvec e_1.
\end{equation}

\begin{theorem}[Holonomic quintic repair]\label{thm:quintic-repair}
The curve
\[
 \uvec R_Q(t)=\uvec R_i\exp\skewop{\uvec r_Q(u)}
\]
interpolates the endpoint dual poses, body dual twists, and body dual-twist
derivatives. Its temporal differential transform is identically holonomic.
Consequently, the prescribed physical acceleration fields are attained at both
endpoints.
\end{theorem}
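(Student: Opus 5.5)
The plan is to mirror the proof of Theorem~\ref{thm:cubic-repair}, adding the second-order conditions. First I will check the Hermite basis at the endpoints. At $u=0$, the values, first derivatives and second derivatives of $(H_{01},H_{10},H_{11},H_{20},H_{21})$ should be $(0,0,0,0,0)$, $(0,1,0,0,0)$ and $(0,0,0,1,0)$. At $u=1$ they should be $(1,0,0,0,0)$, $(0,0,1,0,0)$ and $(0,0,0,0,1)$. These are routine polynomial evaluations. For example, $H_{20}''(0)=1$ and $H_{21}''(1)=\tfrac12(6-24+20)=1$. They give
\[
 \uvec r_Q(0)=0,\quad \uvec r_Q(1)=\uvec s_i,\quad
 \uvec r_Q'(0)=\uvec d_0,\quad \uvec r_Q'(1)=\uvec d_1,\quad
 \uvec r_Q''(0)=\uvec e_0,\quad \uvec r_Q''(1)=\uvec e_1.
\]
Pose interpolation then follows as before: $\exp\skewop{0}=I$, and on the chosen admissible branch $\uvec R_i\exp\skewop{\uvec s_i}=\uvec R_{i+1}$.

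Next I will verify the twists. Formula \eqref{eq:body-twist-segment} with $\uvec J(0)=I$ gives $\uvec\omega_b(t_i)=\uvec d_0/h_i=\uvec\omega_{b,i}$. At $u=1$ it gives $h_i^{-1}\uvec J(\uvec s_i)\uvec d_1=\uvec\omega_{b,i+1}$ by \eqref{eq:cubic-log-data}. The inverse there exists by \eqref{eq:dual-J-inverse}.

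For the twist derivatives I will use \eqref{eq:body-derivative-segment}. At $u=0$, $D\uvec J(0)[\uvec d_0]\uvec d_0=0$, as already used in the proof of Proposition~\ref{prop:recurrence}. Hence $\dot{\uvec\omega}_b(t_i)=\uvec e_0/h_i^2=\dot{\uvec\omega}_{b,i}$ by \eqref{eq:e0-data}. At $u=1$ it gives
\[
 h_i^{-2}\bigl\{D\uvec J(\uvec s_i)[\uvec d_1]\uvec d_1+\uvec J(\uvec s_i)\uvec e_1\bigr\},
\]
and substituting \eqref{eq:e1-data} cancels the $D\uvec J$ term, leaving $\dot{\uvec\omega}_{b,i+1}$. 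This cancellation is the one genuinely nontrivial step. It depends on $D\uvec J(0)[\uvec c]\uvec c=0$ at $u=0$, which holds because the linear part of $\uvec J$ is $-\tfrac12\skewop{\uvec r}$ and $\uvec c\times\uvec c=0$. At $u=1$ it depends on invertibility of the dual Jacobian in the admissible chart.

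Holonomy is immediate from the definition. The prolongation $\uvec R_Q+\eT\dot{\uvec R}_Q$ satisfies \eqref{eq:holonomy-condition} by construction, so the defect \eqref{eq:tensor-defect} vanishes identically. Finally, for the physical statement I will argue as in Theorem~\ref{thm:physical-continuity}. At each endpoint, $\uvec R_Q$, $\uvec\omega_b$ and $\dot{\uvec\omega}_b$ equal the prescribed values, so \eqref{eq:adjoint} and \eqref{eq:twist-derivative-transform} give the prescribed spatial pair $(\uvec\omega_s,\dot{\uvec\omega}_s)$. Taking real and dual parts and using \eqref{eq:Phi2}--\eqref{eq:physical-field} then yields the prescribed $a_\rho$ for every $\rho$.
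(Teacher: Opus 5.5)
Your proposal is correct and follows the paper's proof. The steps are the same: the endpoint Kronecker conditions of the quintic basis, then substitution into \eqref{eq:body-twist-segment}--\eqref{eq:body-derivative-segment} with the $D\uvec J$ term cancelled by the definition of $\uvec e_1$, holonomy by construction of the prolongation, and the acceleration field obtained from the spatial pair. You simply spell out what the paper leaves implicit, namely the explicit basis evaluations, why $D\uvec J(0)[\uvec d_0]\uvec d_0=0$, and why $\uvec J(\uvec s_i)$ is invertible.
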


\begin{proof}
The quintic basis gives
\[
 (\uvec r_Q,\uvec r_Q',\uvec r_Q'')(0)
 =(0,\uvec d_0,\uvec e_0),\qquad
 (\uvec r_Q,\uvec r_Q',\uvec r_Q'')(1)
 =(\uvec s_i,\uvec d_1,\uvec e_1).
\]
Equations \eqref{eq:body-twist-segment} and
\eqref{eq:body-derivative-segment}, together with
\eqref{eq:cubic-log-data}--\eqref{eq:e1-data}, give the prescribed endpoint
twists and derivatives. Temporal prolongation of this single base curve makes
\eqref{eq:holonomy-condition} an identity. The endpoint physical acceleration
fields follow from \eqref{eq:physical-field}.
\end{proof}

\begin{remark}
The cubic and quintic constructions solve different interpolation problems.
The cubic matches pose and twist. The quintic matches pose, twist, and twist
derivative, hence the complete physical acceleration field. The latter is not
claimed to be the unique possible holonomic repair; it is the minimal-degree
polynomial in one logarithm chart satisfying these six endpoint conditions.
\end{remark}

\section{Numerical verification}\label{sec:numerical}

\subsection{Data and implementation}

We consider three prescribed rigid poses at the nonuniform knot times
\[
 t_0=0,\qquad t_1=1,\qquad t_2=2.5,
 \qquad h_0=1,\quad h_1=1.5.
\]
The real rotation vectors $q_i$, in radians, and translations $p_i$, in
metres, are
\begin{equation}\label{eq:numerical-nodes}
\begin{array}{c|rrr|rrr}
 i&q_{i1}&q_{i2}&q_{i3}&p_{i1}&p_{i2}&p_{i3}\\ \hline
0& 0.08&-0.04& 0.06&0.05&-0.02& 0.03\\
1& 0.38&-0.22& 0.29&0.42&-0.16& 0.27\\
2&-0.18& 0.46& 0.34&0.83& 0.31&-0.12
\end{array}
\end{equation}
and the poses are formed as
\[
 R_i=\exp\skewop{q_i},\qquad
 \uvec R_i=(I+\eD\skewop{p_i})R_i.
\]
The initial body dual twist and its derivative are prescribed as
\begin{align}
 \uvec\omega_{b,0}
 &= (0.24,-0.13,0.19)
   +\eD(0.18,0.07,-0.11),\label{eq:num-initial-twist}\\
 \dot{\uvec\omega}_{b,0}
 &= (0.11,0.08,-0.06)
   +\eD(-0.09,0.14,0.05).\label{eq:num-initial-derivative}
\end{align}
Angular components use seconds as the time unit; the dual components carry
the corresponding metre-based dimensions.

The two relative logarithms obtained from \eqref{eq:relative-log} are
\begin{align}
 \uvec s_0={}&(0.2992674465,-0.1796622622,0.2312136665)\notag\\
 &+\eD(0.3750816019,-0.1514565323,0.2248304621),
 \label{eq:num-s0}\\
 \uvec s_1={}&(-0.4427966639,0.7577255960,-0.0168351043)\notag\\
 &+\eD(0.5945254558,0.2816054523,-0.3798654220).
 \label{eq:num-s1}
\end{align}
These rotations do not share a common axis, so the test is genuinely
noncommutative.

All analytic functions of dual tensors are evaluated by their exact dual
extensions. In particular, the quantity
\[
 D\uvec J(\uvec s_i)[\uvec\tau_i]\uvec\tau_i
\]
in \eqref{eq:b-recurrence} is computed by a second nilpotent unit $\eT$:
the coefficient of $\eT$ in
$\uvec J(\uvec s_i+\eT\uvec\tau_i)\uvec\tau_i$ is extracted. Thus the
calculation uses the hyper-dual algebra \eqref{eq:hyperdual-algebra} and no
finite-difference approximation.

The Python scripts reproducing every numerical value reported in this section
are provided as ancillary files. A single driver script executes the metre
and millimetre experiments, the dimensional-covariance check, and the
independent cubic and quintic Hermite verifications.

\subsection{Knot-continuity residuals}

The compact recurrence \eqref{eq:c-recurrence}--\eqref{eq:b-recurrence} was
also evaluated after expanding $D\uvec J$ into the trigonometric Park--Ravani
formula. Table \ref{tab:residuals} reports Euclidean norms obtained by stacking
the real and dual vector components, or Frobenius norms applied separately to
the real and dual tensor parts and then combining their numerical entries.
They are diagnostics in the fixed radian--metre--second convention, not
intrinsic norms on the dual Lie algebra. Dimensional covariance is assessed
separately in Section \ref{sec:dimensional-covariance}.

Most residuals in Tables~\ref{tab:residuals}--\ref{tab:accelerations} verify
mutually equivalent algebraic representations within the implementation. The
finite-difference differentiation of the complete dual-tensor curves in
Section~8.4 provides the independent curve-level check.

\begin{table}[ht]
\centering
\caption{Residuals at the internal knot $t_1$.}
\label{tab:residuals}
\begin{tabular}{@{}lr@{}}
\toprule
Verification & Residual\\
\midrule
Compact versus expanded recurrence & $8.33\times10^{-17}$\\
Dual pose interpolation & $8.05\times10^{-16}$\\
Body dual twist continuity & $1.36\times10^{-16}$\\
Body dual-twist derivative continuity & $2.29\times10^{-16}$\\
Spatial dual twist continuity & $2.00\times10^{-16}$\\
Spatial dual-twist derivative continuity & $2.29\times10^{-16}$\\
Maximum point-acceleration-field residual & $3.14\times10^{-16}$\\
\bottomrule
\end{tabular}
\end{table}

For an explicit field-level check, \eqref{eq:physical-field} was evaluated at
four points: $p_1$ and the three points obtained from $p_1$ by offsets
$0.25e_1$, $0.20e_2$, and $0.18e_3$ metres. The common acceleration values at
the knot are shown in Table \ref{tab:accelerations}. The maximum norm of the
left-minus-right difference is the last residual of Table
\ref{tab:residuals}.

\begin{table}[ht]
\centering
\caption{Physical point accelerations at $t_1$ in $\mathrm{m\,s^{-2}}$.}
\label{tab:accelerations}
\begin{tabular}{@{}crrr@{}}
\toprule
Point & $a_x$ & $a_y$ & $a_z$\\
\midrule
$p_1$          & 1.336338&-1.792470&1.773506\\
$p_1+0.25e_1$  & 1.284059&-1.754410&1.928334\\
$p_1+0.20e_2$  & 1.256146&-1.840993&1.788991\\
$p_1+0.18e_3$  & 1.267878&-1.844018&1.728332\\
\bottomrule
\end{tabular}
\end{table}

\subsection{Dimensional covariance}\label{sec:dimensional-covariance}

The entire experiment was repeated after replacing metres by millimetres.
Every translational datum, dual coefficient, and point offset was therefore
multiplied by $10^3$, whereas angles and times were unchanged. The real parts
of both relative logarithms remained identical. Their dual parts scaled by
$10^3$ with a maximum absolute discrepancy of
$2.27\times10^{-13}\,\mathrm{mm}$. The computed physical accelerations also
scaled by $10^3$; the maximum relative scaling error was
\begin{equation}\label{eq:unit-scaling-error}
 8.25\times10^{-16}.
\end{equation}
The largest knot residual in the millimetre computation was
$5.57\times10^{-13}$, consistent with the corresponding change in the
magnitude of the length-valued components.

This test is not a claim that a numerical norm is independent of units.
Rather, it verifies the correct covariance of each homogeneous component:
angular quantities remain unchanged and length-valued quantities acquire the
same scale factor as the unit of length. No norm combines a dimensionless
rotation matrix with a translation column.

\subsection{Independent curve-level check of the Hermite construction}

The cubic and quintic curves were constructed explicitly for a separate dual
rigid displacement with independently prescribed twists and twist derivatives
at both ends. Their dual tensor exponentials were then differentiated by
five-point, fourth-order finite-difference stencils. This verification is
independent of the $J$ and $DJ$ formulas used to generate the Hermite data.
The resulting pose, twist, and twist-derivative residuals are reported in
Table \ref{tab:hermite-curve-check}. The larger derivative residuals reflect
finite-difference truncation and cancellation rather than failure of the exact
endpoint identities.

\begin{table}[ht]
\centering
\caption{Independent curve-level verification of the Hermite interpolants.}
\label{tab:hermite-curve-check}
\begin{tabular}{@{}lrr@{}}
\toprule
Residual & Initial endpoint & Final endpoint\\
\midrule
Cubic pose & $0$ & $8.05\times10^{-16}$\\
Cubic body twist & $7.14\times10^{-13}$ & $4.04\times10^{-12}$\\
Quintic pose & $0$ & $8.05\times10^{-16}$\\
Quintic body twist & $1.59\times10^{-13}$ & $1.73\times10^{-12}$\\
Quintic body-twist derivative & $1.08\times10^{-10}$ & $6.32\times10^{-9}$\\
\bottomrule
\end{tabular}
\end{table}

The earlier algebraic round-trip residuals, retained in the software as
auxiliary implementation checks, remain between $2.40\times10^{-17}$ and
$3.40\times10^{-17}$. Direct evaluation of the five quintic basis polynomials
and their first two derivatives verifies all endpoint Kronecker conditions
exactly. For the scalar counterexample, the maximum defect is $3$, whereas the
repaired prolongation has identically zero defect.

\section{Conclusion and next steps}

The Park--Ravani construction transfers exactly from $SO(3)$ to orthogonal
dual tensors and therefore to rigid displacements. In compact form, its
nonuniform recurrence is governed by the dual right Jacobian and one
directional derivative. The transferred spline interpolates the prescribed
poses and preserves both the dual twist and its first derivative across knots.
The latter statement has been translated into its correct physical content:
the entire rigid-body acceleration field is continuous.

The noncommuting three-pose example verifies the compact recurrence, twist
continuity, acceleration-field continuity, and dimensional covariance under a
metre-to-millimetre change. The computations use exact hyper-dual directional
differentiation rather than finite differences. The first-order holonomy
problem has also been closed: coefficientwise interpolation of independently
prolonged nodal data is generically nonholonomic, while cubic and quintic
dual-logarithmic Hermite curves provide exact repairs for endpoint twist and
acceleration-field data, respectively. Higher temporal orders, their contact
defects, and holonomic Hermite interpolation of general rigid-motion jets are
left for future work.


\begin{thebibliography}{99}
\small

\bibitem{ParkRavani1997}
F. C. Park and B. Ravani,
``Smooth invariant interpolation of rotations,''
\emph{ACM Transactions on Graphics}, vol. 16, no. 3, pp. 277--295, 1997.

\bibitem{Shoemake1985}
K. Shoemake,
``Animating rotation with quaternion curves,''
\emph{Computer Graphics}, vol. 19, no. 3, pp. 245--254, 1985.

\bibitem{Kim1995}
M.-J. Kim, M.-S. Kim, and S. Y. Shin,
``A general construction scheme for unit quaternion curves with simple
high order derivatives,'' in \emph{Proceedings of SIGGRAPH}, pp. 369--376,
1995.

\bibitem{Crouch1999}
P. Crouch, G. Kun, and F. Silva Leite,
``The de Casteljau algorithm on Lie groups and spheres,''
\emph{Journal of Dynamics and Control Systems}, vol. 5, pp. 397--429, 1999.

\bibitem{Zefran1998}
M. \v{Z}efran, V. Kumar, and C. Croke,
``On the generation of smooth three-dimensional rigid body motions,''
\emph{IEEE Transactions on Robotics and Automation}, vol. 14, no. 4,
pp. 576--589, 1998.

\bibitem{Belta2002}
C. Belta and V. Kumar,
``An SVD-based projection method for interpolation on $SE(3)$,''
\emph{IEEE Transactions on Robotics and Automation}, vol. 18, no. 3,
pp. 334--345, 2002.

\bibitem{Kavan2008}
L. Kavan, S. Collins, J. \v{Z}\'ara, and C. O'Sullivan,
``Geometric skinning with approximate dual quaternion blending,''
\emph{ACM Transactions on Graphics}, vol. 27, no. 4, Art. 105, 2008.

\bibitem{ConduracheBurlacu2014}
D. Condurache and A. Burlacu,
``Dual tensors based solutions for rigid body motion parameterization,''
\emph{Mechanism and Machine Theory}, vol. 74, pp. 390--412, 2014.

\bibitem{Kotelnikov1895}
A. P. Kotelnikov,
\emph{Screw Calculus and Some Applications to Geometry and Mechanics},
Kazan, 1895 (in Russian).

\bibitem{Study1903}
E. Study, \emph{Geometrie der Dynamen}, Teubner, Leipzig, 1903.

\bibitem{Rooney1975}
J. Rooney,
``On the principle of transference,'' in \emph{Proceedings of the Fourth
World Congress on the Theory of Machines and Mechanisms}, Newcastle upon
Tyne, 1975.

\bibitem{FikeAlonso2012}
J. A. Fike and J. J. Alonso,
``Automatic differentiation through the use of hyper-dual numbers for second
derivatives,'' in \emph{Recent Advances in Algorithmic Differentiation},
Springer, pp. 163--173, 2012.

\bibitem{CohenShoham2016}
A. Cohen and M. Shoham,
``Application of hyper-dual numbers to multibody kinematics,''
\emph{Journal of Mechanisms and Robotics}, vol. 8, no. 1, Art. 011015, 2016.

\bibitem{CondurachePopa2025}
D. Condurache and I. Popa,
``Computation without singularity for logarithms of homogeneous matrices and
orthogonal dual tensors,''
\emph{Bulletin of the Polytechnic Institute of Ia\c{s}i}, vol. 71(75),
no. 1--2, pp. 39--56, 2025,
doi: 10.2478/bipmf-2025-0004.

\bibitem{Condurache2022}
D. Condurache,
``Higher-order relative kinematics of rigid body and multibody systems. A
novel approach with real and dual Lie algebras,''
\emph{Mechanism and Machine Theory}, vol. 176, Art. 104999, 2022,
doi: 10.1016/j.mechmachtheory.2022.104999.

\end{thebibliography}
\end{document}